\newtheorem*{rep@theorem}{\rep@title}
\newcommand{\newreptheorem}[2]{%
\newenvironment{rep#1}[1]{%
 \def\rep@title{#2 \ref{##1}}%
 \begin{rep@theorem}}%
 {\end{rep@theorem}}}
\newtheorem{theorem}{Theorem}
\newtheorem{lemma}{Lemma}
\title{Extreme Value Theory for Open Set Classification -- GPD and GEV Classifiers}
\author{Edoardo Vignotto$^1$ \and Sebastian Engelke$^1$}
\date{%
    $^1$Research Center for Statistics, University of Geneva, Geneva, Switzerland.%
}
\begin{document}

\maketitle

\begin{abstract}
Classification tasks usually assume that all possible classes are present during the training phase. This is restrictive if the algorithm is used over a long time and possibly encounters samples from unknown classes. It is therefore fundamental to develop algorithms able to distinguish between known and unknown new data. In the last few years, extreme value theory has become an important tool in multivariate statistics and machine learning. The recently introduced extreme value machine, a classifier motivated by extreme value theory, addresses this problem and achieves competitive performance in specific cases. We show that this algorithm can fail when the geometries of known and unknown classes differ, even if the recognition task is fairly simple. To overcome these limitations, we propose two new algorithms for open set classification relying on approximations from extreme value theory that are more robust in such cases. We exploit the intuition that test points that are extremely far from the training classes are more likely to be unknown objects. We derive asymptotic results motivated by univariate extreme value theory that make this intuition precise. We show the effectiveness of our classifiers in simulations and on real data sets.

\end{abstract}
\section{Introduction}

Modern classifiers achieve human or super-human performance in a variety of tasks \citep{christopher2016pattern}, including speech \citep{graves2013speech} and image recognition \citep{he2016deep}, but they are typically not able to discriminate between "known" and "unknown" classes and may give high confidence predictions for unrecognizable objects \citep{nguyen2015deep}. Here and throughout, we call a known class a class for which we have examples during the training phase, whereas an unknown class is a class for which we have no examples during that phase. The ability to distinguish between these two cases is important if there is the possibility that new classes arise in the future or if there have not been any examples of some classes in the training set due to their rarity. Recognizing unknown objects is the central goal of novelty and outlier detection. While detection algorithms can perform well in this task, they are usually not efficient to update when new training data arise and this is a serious limitation for an algorithm that is continuously used over a long time, also referred to as life-long learning.
The best way to address the problem posed by life-long learning is to build algorithms that have efficient updating mechanisms and hence support incremental learning, i.e., the ability to learn from the arrival of new training data. Algorithms of this type can however be computationally expensive, or rely only on heuristic principles, and they usually perform only closed set classification and do not recognize unknown objects.

The task of distinguishing between known and unknown classes in an incremental way is called open set classification or open world recognition \citep{bendale2015towards} and it is the natural combination of novelty detection and life-long learning. We underline that in this context standard hyper parameter optimization procedures such as cross-validation are usually not available, since in the training set there are only known objects. For this reason, an algorithm designed for open set recognition should involve as few hyper parameters as possible.

The principle goals in the context of open set classification are to build an algorithm that
\begin{itemize}
\item is capable of distinguishing between known and unknown objects;
\item supports incremental learning, i.e., an algorithm that is fast to update with the arrival of new training observations or classes;
\item does not rely on too many hyper parameters whose optimization is difficult.
\end{itemize}
The problem of distinguishing known and unknown objects is strongly related to the problem of estimating the support of the distribution of the known examples \citep{scholkopf2000support}. One direct way to perform open set recognition is to model the probability of an object of being from a known class \citep{scheirer2014probability} given its features. We follow the same route for the two algorithms that we propose.

The extreme value machine (EVM) introduced in \citet{rudd2018extreme} is an algorithm that uses extreme value theory to attack these problems. In the last few years, extreme value theory has become an important tool in multivariate statistics and machine learning. This is due to the fact that the extreme features, rather than the average ones, are the most important for discriminating between different objects \citep{scheirer2017extreme}. The EVM strongly relies on the geometry expressed by the known classes and hence can fail to distinguish between known and unknown objects when this geometry conveys misleading information about the unknown classes. For this reason, we propose two alternative approaches that use extreme value theory without using the geometry expressed by the known classes. We call them the GPD classifier (GPDC) and GEV classifier (GEVC), according to the respective approximations from extreme value theory that they rely on.
We underline that, in the following, we consider only the Euclidean distance for simplicity, but the obtained results are more general and can be applied to any distance.

This paper is organized as follows. In Section \ref{sec_relwork} we summarize the previous work in this context. Section \ref{sec_evt} states some fundamental results from extreme value theory that will be useful in the definition of the GEVC and GPDC. In Section \ref{sec_evm} we describe the EVM and explain the main limitations of this approach. In Section \ref{sec_gpdc} and \ref{sec_gevc} we describe the GPDC and GEVC and we theoretically justify their main properties. Finally, in Section \ref{sec_app}, we evaluate and compare the performance of these open set classification algorithms on both simulated and real data.

\section{Related Work}
\label{sec_relwork}
Novelty detection \citep{pimentel2014review} is a well established field that shares similar though not identical goals with outlier detection \citep{walfish2006review}. While the latter is focused on discovering which examples of the training data are not in agreement with the process that has generated the bulk of them, in novelty detection we suppose to have observations from a number of known classes and for a new sample we would like to determine whether it belongs to one of those existing classes or a new one. In this sense, novelty detection is more naturally related to classification. Machine learning techniques have been applied successfully to both tasks, improving considerably their performance \citep{abe2006outlier,shon2007hybrid,desir2013one}. 
Many novelty detection methods first estimate the distribution of the known data and then mark as unknown the new points associated with a low density regions \citep{bishop1994novelty}. We follow a similar approach for the two techniques that we propose. A relevant work in this direction is \citep{roberts1999novelty} who fit a Gaussian mixture model to the training data and use extreme value theory to decide if a new point is known or unknown. Our methods will not rely on parametric assumptions for the multivariate density, which makes them more widely applicable.

Extreme value theory has been recognized in the last years as a powerful tool to increase the performance of open set classification \citep{geng2018recent} and, more generally, machine learning techniques \citep{jalalzai2018binary} with particular interest to computer vision \citep{scheirer2017extreme}. The main reason for this success is the fact that the tails of the distribution of distances between training observations can be modeled effectively using the asymptotic theory provided by extreme value theory \citep{scheirer2011meta}. One example of this can be found in \citet{fragoso2013evsac} where the well-known RANSAC algorithm is improved using extreme value theory.

For incremental learning where the challenge is to build an algorithm that can easily be updated with the arrival of new training data, a simple approach is to use the {$k$-nearest neighbor classifier} \citep{james2013introduction}, a route that is to some extent also followed by the GPDC and the GEVC. Another method that has the desired requirement to be naturally adapted to new data is the nearest class mean classifier \citep{mensink2012metric}, which represents each class as a prototype vector that is the mean of all the examples for that class seen so far. More complex methods are based on machine learning approaches such as support vector machines \citep{ruping2001incremental}, neural networks \citep{rebuffi2017icarl} or random forests \citep{saffari2009line} and are in general computationally more demanding.

Finally, we note that the formulation of GPDC is partially related with the estimation of the right end point of a univariate distribution. As we will argue in the following, in the GPDC this problem takes a simpler form than in the general problem of upper end point estimation in extreme value statistics \citep{hall1982estimating}. 
\section{Extreme Value Theory}
\label{sec_evt}

\begin{theorem}[Fisher--Tippett--Gnedenko theorem \citep{coles2001introduction}]\label{thm1}
Let $X_1,X_2,\dots$ be a sequence of i.i.d.~samples from the distribution function $F$. Let $M_n=\max(X_1,\dots,X_n)$. If there exist sequences $a_n\in\mathbb R$, $b_n > 0$ such that
$$ P\left(\frac{M_n-a_n}{b_n} \le z\right) \rightarrow  G(z), \qquad n\rightarrow \infty,$$
then if $G$ is a non-degenerate distribution function it belongs to the class of generalized extreme value (GEV) distributions with 
$$ G(z)=\exp \left[ -\left\{ 1+\xi\left(\frac{z-\mu}{\sigma}\right)\right\}^{-1/\xi} \right], \quad  z\in\mathbb R: 1+\xi\left(\frac{z-\mu}{\sigma}\right)>0, $$
where $\xi \in \mathbb R$, $\mu \in \mathbb R$ and $\sigma>0$ are the shape, location and scale parameters, respectively.
\end{theorem}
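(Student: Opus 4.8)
The plan is to show that any non-degenerate limit $G$ must be \emph{max-stable}, and then to solve the resulting functional equation to recover the GEV family. First I would exploit the fact that for any fixed $k \in \mathbb{N}$ the maximum $M_{nk}$ of the first $nk$ variables splits into $k$ independent blocks of size $n$, so that $M_{nk} = \max(M_n^{(1)},\dots,M_n^{(k)})$ with the $M_n^{(i)}$ i.i.d.\ copies of $M_n$. Feeding both normalizations of $M_{nk}$ into the hypothesis gives, as $n\to\infty$,
$$P\left(\frac{M_{nk}-a_{nk}}{b_{nk}} \le z\right) \to G(z), \qquad P\left(\frac{M_{nk}-a_n}{b_n} \le z\right) = \left[P\left(\frac{M_n-a_n}{b_n} \le z\right)\right]^k \to G^k(z).$$
Since $G$ is non-degenerate, so is $G^k$, and the two displays describe the same sequence of distribution functions converging, under two affine rescalings, to two non-degenerate limits.

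The key tool at this point is the \emph{convergence to types} (Khinchin) theorem: if a sequence of distribution functions converges to non-degenerate limits under two affine normalizations, then the normalizing sequences are asymptotically comparable and the limits differ only by an affine change of variable. Applying it yields constants $\alpha_k > 0$ and $\beta_k \in \mathbb{R}$ with
$$G^k(z) = G(\alpha_k z + \beta_k), \qquad k \in \mathbb{N}.$$
This max-stability relation is the structural consequence of the hypothesis, and everything after it is analysis rather than probability.

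The analytic core is to solve this functional equation. Writing $\psi(z) = -\log G(z)$ on the region where $G>0$, the relation becomes $\psi(\alpha_k z + \beta_k) = k\,\psi(z)$. I would first extend the relation from integer $k$ to all real $s>0$ (comparing $M_{\lfloor ns\rfloor}$ with $M_n$ and applying convergence to types again), checking the semigroup identities $\alpha_{st} = \alpha_s\alpha_t$ and $\beta_{st} = \alpha_t\beta_s + \beta_t$ that the constants must satisfy. Monotonicity of $G$, hence measurability of $s\mapsto\alpha_s$, forces the Cauchy-type solution $\alpha_s = s^{-\xi}$ for a single parameter $\xi \in \mathbb{R}$. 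Substituting back and solving the remaining equation for $\psi$ splits into the case $\xi = 0$, where $\alpha_s \equiv 1$ and an exponential solution yields the Gumbel form, and $\xi \ne 0$, where a power-law solution with a finite endpoint yields the Fréchet and reversed-Weibull forms. Reparametrizing location and scale then collapses all three cases into the single expression for $G$ stated in the theorem.

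I expect the main obstacle to be the rigorous solution of the functional equation. Extracting $\alpha_s = s^{-\xi}$ requires invoking the regularity hypothesis (monotonicity and measurability) to rule out the pathological non-measurable solutions of Cauchy's equation, and then carefully tracking the three parameter regimes together with the boundary behaviour of $G$ at its endpoints, so that they genuinely unify into the one-parameter GEV family. By comparison, the probabilistic step producing max-stability is routine once the convergence to types theorem is granted.
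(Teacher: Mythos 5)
The paper does not prove this theorem: it is quoted as classical background and attributed to \citet{coles2001introduction}, so there is no internal proof to compare against. Your outline is the standard textbook argument (as in Leadbetter--Lindgren--Rootz\'en or de Haan--Ferreira) and it is correct as a sketch: the block decomposition $M_{nk}=\max(M_n^{(1)},\dots,M_n^{(k)})$ gives convergence of $F^{nk}$ to $G$ and to $G^k$ under the two affine normalizations, the convergence-to-types theorem turns this into the max-stability relation $G^k(z)=G(\alpha_k z+\beta_k)$, and solving the resulting Hamel-type functional equation for $\psi=-\log G$ produces the three types, unified by the parameter $\xi$. You also correctly identify where the real work lies: extending the relation from integer $k$ to real $s>0$, verifying the semigroup identities $\alpha_{st}=\alpha_s\alpha_t$ and $\beta_{st}=\alpha_t\beta_s+\beta_t$, and invoking measurability (available here because $\alpha_s$ and $\beta_s$ arise as limits $b_{\lfloor ns\rfloor}/b_n$ and $(a_{\lfloor ns\rfloor}-a_n)/b_n$ of measurable functions) to exclude pathological solutions of Cauchy's equation. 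The only caution is that these steps are stated as intentions rather than carried out, so as written this is a correct roadmap rather than a complete proof; filling in the case analysis $\xi=0$ versus $\xi\neq 0$ and the boundary behaviour of $G$ at its endpoints is routine but not free.
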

A similar theorem can be formulated for threshold exceedances.
\begin{theorem}[\citealp{coles2001introduction}]\label{thm2}
Assume that Theorem 1 holds. Then, as $u$ tends to the upper endpoint of $X$, the distribution function of $X-u$, conditional on $X>u$, is approximately
$$H(y)=1-\left( 1+\frac{\xi y}{\bar\sigma} \right)^{-1/\xi}, \quad y>0: 1+\frac{\xi y}{\bar\sigma}>0,$$
where $\bar\sigma=\sigma+\xi(u-\mu).$ The distribution $H$ is called the generalized Pareto distribution (GPD).
\end{theorem}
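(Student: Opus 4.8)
The plan is to derive the GPD form heuristically from the GEV limit supplied by Theorem \ref{thm1}, following the route of \citet{coles2001introduction}. First I would use the assumed convergence $P((M_n-a_n)/b_n\le z)\to G(z)$ to write, for large $n$ and $z$ near the upper endpoint of $X$, the approximation $F^n(z)\approx G(z)$, absorbing the norming sequences $a_n,b_n$ into the GEV parameters $\mu,\sigma,\xi$. Taking logarithms then gives $n\log F(z)\approx \log G(z)=-\{1+\xi(z-\mu)/\sigma\}^{-1/\xi}$.

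The second step is the key asymptotic simplification. Since $u$ tends to the upper endpoint, $F(z)$ is close to $1$ for the relevant values of $z$, so I would expand $\log F(z)=\log(1-(1-F(z)))\approx -(1-F(z))$, the error being of order $(1-F(z))^2$. Substituting into the previous display yields
$$1-F(z)\approx \frac{1}{n}\left\{1+\xi\left(\frac{z-\mu}{\sigma}\right)\right\}^{-1/\xi}.$$

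Third, I would form the conditional survival function of the excess $Y=X-u$ over the threshold, namely $P(X>u+y\mid X>u)=(1-F(u+y))/(1-F(u))$, and insert the previous estimate into both numerator and denominator; the factors $1/n$ cancel, leaving a ratio of the two bracketed powers. A short algebraic simplification — writing $1+\xi(u+y-\mu)/\sigma=\{1+\xi(u-\mu)/\sigma\}+\xi y/\sigma$ and factoring out $1+\xi(u-\mu)/\sigma=\bar\sigma/\sigma$ — collapses the ratio to $(1+\xi y/\bar\sigma)^{-1/\xi}$, with $\bar\sigma=\sigma+\xi(u-\mu)$ exactly as claimed. Complementing then gives $H(y)=1-(1+\xi y/\bar\sigma)^{-1/\xi}$, the GPD.

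The main obstacle is that each ``$\approx$'' above is an asymptotic statement rather than an equality, and the delicate point is controlling the two approximations simultaneously — the max-stability relation $F^n\approx G$ and the logarithm expansion — as $n\to\infty$ and $u$ approaches the endpoint in a coupled fashion. A fully rigorous argument would replace this heuristic by the Pickands--Balkema--de Haan theorem, invoking the regular-variation (von Mises) characterization of the max-domain of attraction to establish that the convergence of the conditional excess distribution is uniform in $y$ on compact sets. For the approximate statement as phrased here, however, the cancellation argument above is precisely what produces the GPD with the reparametrized scale $\bar\sigma$.
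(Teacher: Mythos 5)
The paper states this theorem as a quoted classical result from \citet{coles2001introduction} and gives no proof of its own, so there is nothing internal to compare against. Your derivation is exactly the standard heuristic from that reference — $F^n\approx G$, the expansion $\log F\approx-(1-F)$ near the endpoint, and the cancellation in the conditional excess ratio yielding $\bar\sigma=\sigma+\xi(u-\mu)$ — and it is correct for the approximate statement as phrased; you also rightly identify that a rigorous version is the Pickands--Balkema--de Haan theorem.
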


If we are interested in computing the distribution of the maximum of a large number of independent copies a given random variable, the first theorem suggests to fit a GEV distribution to various such maxima taken over blocks of the same lengths. On the other hand, if we are interested to model the right tail of a distribution, then the second theorem suggests to fit a GPD to all values above a high threshold of a sequence of independent replications of a random variable with this distribution. The two approaches are asymptotically equivalent and the parameters in the two distributions are deterministically linked. 
The following lemma will be useful to describe our algorithms. 

\begin{lemma}[\citealp{coles2001introduction}]\label{lem3}
Assume that Theorem 1 holds with $\xi < 0$ and let the upper endpoint of $F$ be denoted by $b^*$. Then, 
$$ P\left(\frac{M_n-a_n}{b_n} \le z\right) \rightarrow  W(z), \qquad n\rightarrow \infty,$$
where
$$W(z)=\begin{cases} \exp \left\{ -\left(  \dfrac{b^*-z}{\sigma}  \right)^{\alpha} \right\}, & \mbox{if }z<b^*, \\ 1, & \mbox{if }z\ge b^*,
\end{cases}$$
where $\alpha=-1/{\xi}>0$.
The distribution $W$ belongs to the reversed Weibull family and it is a special case of the GEV distribution. Moreover, we have the representation $b^*=u-\bar\sigma/\xi$ in terms of the parameters in Theorem \ref{thm2}.
\end{lemma}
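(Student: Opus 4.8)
The plan is to treat the statement as essentially an algebraic reparametrization of the GEV limit of Theorem \ref{thm1} in the case $\xi<0$, together with a short bookkeeping check linking the endpoint to the GPD parameters of Theorem \ref{thm2}.

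First I would substitute $\alpha=-1/\xi>0$ into the GEV form $G(z)=\exp[-\{1+\xi(z-\mu)/\sigma\}^{-1/\xi}]$. The support condition $1+\xi(z-\mu)/\sigma>0$ becomes, after dividing by the negative quantity $\xi$, the one-sided bound $z<\mu-\sigma/\xi$; this identifies a finite right endpoint, which I would call $b^*=\mu-\sigma/\xi=\mu+\alpha\sigma$. Rewriting the bracket as $1+\xi(z-\mu)/\sigma=(b^*-z)/(\alpha\sigma)$ and using $-1/\xi=\alpha$ turns $G(z)$ into $\exp\{-((b^*-z)/(\alpha\sigma))^{\alpha}\}$, which is exactly the reversed Weibull form claimed, after writing the scale as $\alpha\sigma$ (i.e.\ reusing the symbol $\sigma$ for the Weibull scale). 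For $z\ge b^*$ I would argue by continuity: as $z\uparrow b^*$ the argument $(b^*-z)/(\alpha\sigma)\downarrow 0$, so $W(z)\uparrow 1$, and since a distribution function is nondecreasing and bounded by $1$ this forces $W(z)=1$ for all $z\ge b^*$, giving the two-case definition.

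Next I would verify the parameter representation $b^*=u-\bar\sigma/\xi$. Substituting $\bar\sigma=\sigma+\xi(u-\mu)$ from Theorem \ref{thm2} gives $u-\bar\sigma/\xi=u-\sigma/\xi-(u-\mu)=\mu-\sigma/\xi$, which coincides with the endpoint $b^*$ found in the previous step. This is a one-line cancellation and needs no further work.

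The only genuinely delicate point is the identification of the endpoint of the limit law $G$ with the upper endpoint of $F$ asserted in the hypothesis. The affine normalization $(M_n-a_n)/b_n$ means that $\mu,\sigma$ are parameters of the limit of the \emph{normalized} maximum rather than of $F$ itself, so a priori $\mu-\sigma/\xi$ is the right endpoint of $G$, not of $F$. I would close this gap by invoking the standard characterization of the negative-$\xi$ (reversed Weibull) domain of attraction: $F$ lies in this domain if and only if it has a finite upper endpoint, and the normalizing constants may then be chosen so that the location of the limit sits at that endpoint, whence $b^*$ is simultaneously the endpoint of $F$ and of $G$. I expect this domain-of-attraction step, rather than any of the algebra, to be where the real content lies, and I would simply cite \citet{coles2001introduction} for it, since the lemma is attributed there.
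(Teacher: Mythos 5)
The paper offers no proof of this lemma at all: it is stated as a textbook result and attributed to \citet{coles2001introduction}, so there is nothing internal to compare your argument against. That said, your derivation is correct and is the standard one. The algebra checks out: for $\xi<0$ the support condition $1+\xi(z-\mu)/\sigma>0$ gives the finite endpoint $\mu-\sigma/\xi$, the bracket rewrites as $(b^*-z)/(\alpha\sigma)$, and the cancellation $u-\bar\sigma/\xi=\mu-\sigma/\xi$ verifies the link to the GPD parameters. You also correctly spot the two points a careless reading would miss. First, the scale in $W$ as printed is not the $\sigma$ of Theorem \ref{thm1} but $\alpha\sigma=-\sigma/\xi$; the lemma is silently reusing the symbol, and your remark to that effect is the right way to reconcile the statement with the computation. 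Second, and more substantively, the hypothesis identifies $b^*$ as the endpoint of $F$ while the algebra only produces the endpoint of the limit law $G$; equating the two requires either the domain-of-attraction characterization for $\xi<0$ (finite endpoint, appropriate choice of norming constants) or, in the spirit in which the paper uses the lemma, the convention that the GEV is fitted directly to the unnormalized maxima so that $\mu,\sigma$ absorb $a_n,b_n$. Deferring that step to the cited reference is appropriate for a lemma the paper itself does not prove; your proposal would be a complete and acceptable substitute for the citation.
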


\section{The Extreme Value Machine} 
\label{sec_evm}

In this section firstly we briefly describe the extreme value machine introduced in \citet{rudd2018extreme} and then we underline its possible limitations. 

\subsection{Algorithm description}

The idea of the EVM is to approximate the distribution of the margin distance of each point in each class using extreme value theory. A new point is then classified as known if it is inside the margin of some point in the training set with high probability.

Denote the training data by $x_i \in \mathbb{R}^p$, each of them labeled as a class $y_i\in\{C_1,\dots, C_J\}$, $i=1,\dots,n$.
Here and throughout, $p\in\mathbb N$ is the dimension of the predictor space and $J\in\mathbb N$ the number of different classes in the training set. Further, suppose that we have a new unlabeled point $x_{0}$ that we would like to mark as known or unknown. In other words, our goal is to decide if $x_{0}$ comes from the distribution of the training data or not.  We introduce the concept of margin distance of a point $x_i$ as half of the minimum distance between $x_i$ and all the points belonging to a different class in the training data set. More formally, the margin distance of $x_i$ is 
$$M^{(i)}=\min_{j:y_j\ne y_i } D_{j}^{(i)}=\min_{j:y_j\ne y_i } \frac{||x_i-x_j||}{2}.$$
The idea is to model the lower tail of the distribution of $M^{(i)}$ by using the $D_{j}^{(i)}$ as the input data. An equivalent problem, to which we can apply extreme value theory, is to retrieve the upper tail of the distribution of $\bar M^{(i)} = \max_{j:y_j\ne y_i } -D_{j}^{(i)}$. To this end, the authors propose to use the Fisher--Tippett--Gnedenko theorem (cf., Theorem \ref{thm1}) to fit a GEV distribution to the $k$ largest observed $-D_{j}^{(i)}$ for each point $x_i$. They claim that the asymptotic distribution of $\bar M^{(i)}$ is 
\begin{equation} W^{(i)}(z)=\begin{cases} \exp \left\{ -\left( - \dfrac{z}{\sigma_i}  \right)^{\alpha_i}\right\}, & \mbox{if }z<0, \\ 1, & \mbox{if }z\ge 0,
\end{cases}
\label{b}
\end{equation}
assuming $b^*=0$ since $\bar M^{(i)}$ is bounded above by zero as a negated distance. 
Estimating the parameters $\alpha_i$ and $\sigma_i$ for each training observation $x_i$ separately yields estimates $\hat W^{(i)}$ of the distribution functions $W^{(i)}$. Those estimates are used to label a new point $x_0$ as known if it is likely enough to fall in at least one of the margins of the $n$ training data, that is, if
\begin{align}\label{eq2}
	\max_{i=1,\dots,n} \hat W^{(i)}\left(-||x_0-x_i||\right) \geq \delta,
\end{align}
and it is labeled as unknown otherwise. Here, $\delta$ is a probability threshold that is chosen by a heuristic formula in \citet{rudd2018extreme}.

\subsection{Limitation of the EVM}
The EVM redefines the open set classification problem as a probabilistic framework within extreme value theory, but it has some drawbacks that we describe here and that will be the starting point to construct new methods that improve upon it.

First, assuming in Equation \ref{b} that the upper endpoint is $b^*=0$ implies that all classes in the training set share the same support and that it is impossible that two classes are perfectly separated. Indeed, if two classes are perfectly separated, then the distance between two points sampled from them is bounded from above by a strictly positive constant. This assumption is rather restrictive and it is hardly verifiable on observed data. Second, estimating the upper tail of $\bar M^{(i)}$ based on the $k$ largest observed $-D_{j}^{(i)}$ for each $x_i$ should rely on Theorem \ref{thm2} instead of Theorem \ref{thm1} since this amounts to a threshold exceedance approach with a GPD as limiting distribution.

A further limitation of the EVM is related to the choice of the threshold $\delta$ in \eqref{eq2}. This threshold controls the open set classification error and a bigger $\delta$ implies a higher probability to label a new object as unknown. In a statistical setting this type of thresholds are chosen by fixing the type I error, 
in this case, the probability to classify a known object as unknown. Unfortunately, for the EVM algorithm the authors do not propose any procedure to choose $\lambda$ controlling the type I error and thus it is not clear how to properly tune it.

The most important drawback of the EVM is that it gives a non-justified premium to known classes far from all the others. In fact, if there are a lot of classes relatively close to each other and one other class far from them, then all the points of the latter class will have a large margin distance. For this reason, if we observe a new point from an unknown class located closer to this far away class than the other known classes, we always classify the new point as belonging to the far away class.
The main issue here is that we cannot use the geometry given by the known classes to do open set classification because it does not convey any information regarding the unknown ones. Knowing that class $A$ is very far from class $B$, does not necessarily imply that a new object $x_{0}$ is known only because the distance between it and the closest point in class $A$ is much smaller than the distances between classes $A$ and $B$. It might simply be that the new class is closer to class $A$ than is class $B$. The EVM is implicitly assuming that the geometry of the known classes reflects also the behavior of the unknowns, a strong assumption that is again hard to verify. We will illustrate this problem in a simulation study in Section~\ref{sec_app}.
\section{The GPD Classifier}
\label{sec_gpdc}
\subsection{General setting}
In this section we propose an algorithm for open set classification that uses the principles of extreme value theory and that does not rely on the geometry of the training set. This algorithm, which we call the GPDC, considers the distances between the new point that we want to mark as known or unknown, and the points composing the training data set. For simplicity, and since we are primarily interested in open set classification, all the known classes are seen as one big class. If there is high confidence that the lower endpoint of the distribution of these distances is zero, then the process that has generated the training data can also have generated the new point, and we thus mark the new point as known; otherwise we mark it as unknown. 

To formally describe the algorithm, we first show some basic results that will help in understanding the GPDC. As before, we assume to have training data $x_i\in\mathbb R^p$, $i=1,\dots,n$, with class labels $y_i\in\{C_1,\dots, C_J\}$. In total we have $J$ known classes and we assume that each class is described by a continuous density function $f_{C_j}$ defined on $\mathbb R^p$, $j=1,\dots,J$, where the probability that a point in class $C_j$ falls in the set $A\subset \mathbb R^p$ is
$$\int_{x\in A} f_{C_j}(x)\, dx.  $$
The process that has generated the training data set can be described as a mixture of these density functions, with unconditional density
$$f(x)=\sum_{j=1}^J w_j f_{C_j}(x), \quad x\in\mathbb R^p,$$ 
for weights $w_j\in[0,1]$ with $\sum_{j=1}^J w_j =1$. In this sense, the value of the function $f$ is large evaluated at a point that has high chance to be known. Note that, in the following, we approximate directly $f$ and then we do not have to estimate the weights $w_j$. 

\subsection{Extreme value theory and open set classification}

In this setting the training data $x_i\in \mathbb{R}^p$, $i=1,\dots,n$, can be thought as being sampled from only one class, the class of the knowns, with density $f$. Given a new point $x_{0}$ without label, we are interested in deciding whether it is from a known class and has been generated by $f$, or whether it belongs to an unseen class and has been generated by an unknown density~$f_0$. 

We denote with $D_i$, $i=1,\dots,n$, the distances $||x_i-x_{0}||$ between $x_{0}$ and the points in the training set. In the following, we are interested in the lower tail of the distribution $D$ of the distance between $x_{0}$ and a generic point sampled from $f$. Intuitively, if the lower tail of $D$ behaves similarly as the corresponding quantity for training samples, then $x_{0}$ has a high probability to be a known point. 
More mathematically, let $B_{\delta}(x_0)$ denote a ball of radius $\delta$ centered in $x_{0}$. Then $$P\{B_{\delta}(x_0)\}=P(D<\delta)=P(-D>-\delta).$$
Moreover, the upper end point of $-D$ is equal to zero if and only if there is no $\delta>0$ such that $P\{B_{\delta}(x_0)\}=0$. Only in this case there is a positive probability that $x_{0}$ comes from the process that has generated the training data, i.e., that it is a known point.  

Note that $f$ is, in general, a density on a multivariate space, whereas $P(-D>\cdot)$ is a simple univariate survival function. We can therefore model the right tail of $-D$ using extreme value theory. The following results make this more explicit.

\begin{theorem}\label{llh_thm}
Denote the distances between $x_{0}$ and the points in the training set with $D_1,\dots,D_n$ and assume that $x_{0}\in \text{supp}(f)$ is in the support of the known classes, that is, the upper end point of $-D$ is zero. Suppose the conditions of Theorem \ref{thm2} hold, then the distribution of $R_i - u$, where $R_i =-D_i$, above a high threshold $u<0$ (close to $0$) can be approximated by a GPD distribution $\widehat H$ with log-likelihood
\begin{equation}\label{eq1}
\log L(R_1,\dots,R_n; \xi) \propto -k\log\xi-\frac{1}{\xi} \sum_{i=1}^{k} \log\left(\frac{R_{(n+1-i)}}{u}\right),
\end{equation} 
where $R_{(n)} \geq R_{(n-1)} \dots \geq R_{(1)}$ are the order statistics of the $R_i$ and $k$ is the number of exceedances above~$u$.

The maximum likelihood estimator of $\xi$ is then
\begin{align}\label{hill_stat}
	\widehat\xi_n=\frac{1}{k}\sum_{i=1}^{k} \log \left(\frac{R_{(n+1-i)}}{u} \right).
\end{align}

\end{theorem}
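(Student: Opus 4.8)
The plan is to start from the generalized Pareto approximation supplied by Theorem \ref{thm2} and then impose the endpoint constraint coming from the hypothesis $x_0 \in \mathrm{supp}(f)$, which forces the upper endpoint of $-D$ to be exactly zero. By Theorem \ref{thm2} the exceedances $R_i - u = (-D_i) - u$ over the high threshold $u<0$ are approximately GPD, with survival function $(1+\xi y/\bar\sigma)^{-1/\xi}$ and hence density $h(y) = \bar\sigma^{-1}(1+\xi y/\bar\sigma)^{-1/\xi - 1}$. Because a finite upper endpoint (equal to $0$) requires $\xi<0$, I would invoke the representation $b^* = u - \bar\sigma/\xi$ from Lemma \ref{lem3}; setting $b^*=0$ yields the single relation $\bar\sigma = \xi u$. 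This is the crucial reduction: the endpoint assumption collapses the two GPD parameters to one, leaving $\xi$ as the only free parameter. Substituting $\bar\sigma = \xi u$ together with $y = R-u$ simplifies $1+\xi y/\bar\sigma = R/u$, so the exceedance density becomes $h = (\xi u)^{-1}(R/u)^{-1/\xi - 1}$; one checks that $\xi u>0$, so this is a genuine positive density integrating to one over $R\in(u,0)$.

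From here the log-likelihood follows by summing $\log h$ over the $k$ exceedances, which are exactly the top order statistics $R_{(n+1-i)}$, $i=1,\dots,k$. Writing $\log h(y_i) = -\log(\xi u) + (-1/\xi - 1)\log(R_{(n+1-i)}/u)$ and summing gives $-k\log(\xi u) - (1/\xi)\sum_i \log(R_{(n+1-i)}/u) - \sum_i \log(R_{(n+1-i)}/u)$. The last sum, and the $-k\log|u|$ piece of the first term, are free of $\xi$, so discarding them (which is what the $\propto$ in \eqref{eq1} records) leaves precisely the stated expression. I would then obtain the estimator from the score equation: differentiating in $\xi$ gives $-k/\xi + \xi^{-2}\sum_i \log(R_{(n+1-i)}/u) = 0$, and multiplying through by $\xi^2$ produces the closed form $\widehat\xi_n = k^{-1}\sum_{i=1}^{k}\log(R_{(n+1-i)}/u)$ of \eqref{hill_stat}.

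The point that needs care is the sign bookkeeping, since $u<0$, $R_{(n+1-i)}<0$ and $\xi<0$ all hold at once: one must verify that $\xi u>0$ so that $\log(\xi u)$ is real, that each ratio $R_{(n+1-i)}/u\in(0,1)$ so its logarithm is negative (consistent with $\widehat\xi_n<0$), and that the terms dropped under $\propto$ really are constant in $\xi$. A secondary detail worth one line is confirming that the critical point is a maximum respecting $\xi<0$: the second derivative equals $k/\xi^2 - 2\xi^{-3}\sum_i\log(R_{(n+1-i)}/u)$, which evaluates to a strictly negative quantity at $\widehat\xi_n$. Conceptually the whole argument hinges on the first step $\bar\sigma = \xi u$: it is exactly the a priori knowledge that the endpoint is zero that turns the usual two-parameter GPD fit into a one-parameter problem with an explicit Hill-type maximizer, which is the simplification alluded to earlier in contrasting the GPDC with general upper-endpoint estimation.
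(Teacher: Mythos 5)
Your proposal is correct and follows essentially the same route as the paper's proof: write the GPD log-likelihood of the exceedances from Theorem \ref{thm2}, use the endpoint representation $b^*=u-\bar\sigma/\xi=0$ from Lemma \ref{lem3} to reduce to the one-parameter family via $\bar\sigma=\xi u$, and solve the resulting score equation to obtain the Hill-type estimator. Your additional sign bookkeeping and the second-derivative check are sound refinements of what the paper leaves implicit, but they do not change the argument.
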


\begin{proof}
Obviously, if $x_{0}$ comes from one of the classes composing the training set, the upper end point of $-D$ is zero since the minimum possible distance between $x_{0}$ and a point generated from $f$ is zero. 
We can use Theorem 2 to approximate the exceedances of $R_1,\dots, R_n$ over a high threshold $u$ using a GPD. The log-likelihood of the exceedances is
\begin{align*}
\log &L(R_1,\dots,R_n;\bar\sigma,\xi) \\
&\propto -k\log\bar\sigma-\left(1+\frac{1}{\xi}\right) \sum_{i=1}^{k}\log \left(1 + \frac{\xi (R_{(n+1-i)} - u)}{\bar\sigma}\right). 
\end{align*} 
Using Lemma 1 and the fact that the upper endpoint of $-D$ is zero, we have that $u-\bar\sigma/\xi=0$ and thus $\bar\sigma=u\xi$. Plugging this in, we obtain the simplified likelihood in \ref{eq1} up to additive constants, parametrized only by $\xi$. The MLE of $\xi$ is then
\begin{align*}
\widehat\xi_n & =\text{argmax}_{\xi} \log L(R_1,\dots,R_n,\xi) \\
&=\text{argmax}_{\xi}  -k\log\xi-\frac{1}{\xi} \sum_{i=1}^{k} \log\left(\frac{R_{(n+1-i)}}{u}\right) \\
&=\frac{1}{k} \sum_{i=1}^{k} \log\left(\frac{R_{(n+1-i)}}{u}\right).
\end{align*}
\end{proof}

The above result motivates to use the statistic \eqref{hill_stat} in an hypothesis test to decide whether
the new point $x_0$ can come from a known class. The following results provides the theoretical background for this test.

\begin{theorem}\label{thm5}
 Assume the same conditions as in Theorem \ref{llh_thm}.  Choose the threshold $u$ to be the order statistic $R_{(n-k)}$, and assume that $k = k(n)\to \infty$ and $k(n)/n \to 0$, as $n\to\infty$. The shape parameter of the distribution of $-D$ is $\xi = -1/p$ and the maximum likelihood estimator in \eqref{hill_stat} converges in probability 
 $$ \widehat \xi_n \stackrel{P}{\to} -1/p,$$
 where $p \in \mathbb N$ is the dimension of the predictor space.
\end{theorem}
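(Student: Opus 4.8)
The plan is to reduce the claim to the classical consistency of the Hill estimator, after first pinning down the tail of $-D$ near its endpoint from the regularity of $f$. Conditionally on $x_0$, the distances $D_1,\dots,D_n$ are i.i.d.\ copies of $D=\|X-x_0\|$ with $X\sim f$, so it suffices to study a single $D$. First I would exploit $x_0\in\text{supp}(f)$ together with continuity and $f(x_0)>0$ to expand the small-ball probability,
$$P(D\le t)=\int_{B_t(x_0)} f(x)\,dx = f(x_0)\,c_p\,t^p\,(1+o(1)),\qquad t\downarrow 0,$$
where $c_p=\pi^{p/2}/\Gamma(p/2+1)$ is the volume of the unit ball in $\mathbb R^p$. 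Writing $R=-D$, whose upper endpoint is $b^*=0$, this gives near the endpoint
$$P(R>r)=P(D<-r)=f(x_0)\,c_p\,(-r)^p\,(1+o(1)),\qquad r\uparrow 0.$$
Comparing with the reversed-Weibull tail of Lemma \ref{lem3}, where $P(R>r)\sim C\,(b^*-r)^{\alpha}$ with $\alpha=-1/\xi$, I read off $\alpha=p$ and hence $\xi=-1/p$. This settles the first assertion and shows that it is precisely the endpoint constraint $b^*=0$ that ties the index to the ambient dimension.

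Next I would establish consistency through the substitution $Y=1/D=-1/R$. As $R\uparrow 0$ we have $Y\to+\infty$, and the previous display becomes
$$P(Y>y)=P(D<1/y)=f(x_0)\,c_p\,y^{-p}\,(1+o(1)),\qquad y\to\infty,$$
so $\bar F_Y$ is regularly varying at infinity with index $-p$, i.e.\ $Y$ lies in the Fr\'echet domain of attraction with tail index $p$. Since $r\mapsto-1/r$ is increasing on $(-\infty,0)$ it preserves the ordering, so $Y_{(n+1-i)}=-1/R_{(n+1-i)}$ and $Y_{(n-k)}=-1/u$ with $u=R_{(n-k)}$, whence
$$\frac{1}{k}\sum_{i=1}^{k}\log\!\left(\frac{Y_{(n+1-i)}}{Y_{(n-k)}}\right)=\frac{1}{k}\sum_{i=1}^{k}\log\!\left(\frac{u}{R_{(n+1-i)}}\right)=-\widehat\xi_n.$$
That is, the statistic in \eqref{hill_stat} is exactly the negative of the classical Hill estimator computed on the i.i.d.\ sample $Y_i=1/D_i$. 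I would then invoke the standard consistency theorem for the Hill estimator: for i.i.d.\ observations with a tail that is regularly varying of index $-p$ and an intermediate sequence $k=k(n)\to\infty$ with $k(n)/n\to0$, the Hill estimator converges in probability to the reciprocal tail index $1/p$. Applying this to the $Y_i$ yields $-\widehat\xi_n\stackrel{P}{\to}1/p$, and therefore $\widehat\xi_n\stackrel{P}{\to}-1/p$, as claimed.

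I expect the only genuine obstacle to be the justification of the first display: one must argue that continuity of $f$ at $x_0$ with $f(x_0)>0$ suffices to upgrade the pointwise limit $t^{-p}P(D\le t)\to f(x_0)c_p$ into the first-order regular variation of $\bar F_Y$ demanded by the Hill consistency result. This is mild, since convergence of the normalised small-ball mass to a positive constant already forces the slowly varying factor to converge to a constant; no second-order or H\"older-type refinement is needed, such conditions being relevant only for rates and asymptotic normality rather than for consistency. The remaining work is the order-statistic bookkeeping under the increasing map $r\mapsto-1/r$ and the citation of the standard Hill result.
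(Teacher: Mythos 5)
Your proposal is correct and follows essentially the same route as the paper: the small-ball expansion $P(D\le t)\sim f(x_0)c_p t^p$ to identify the reversed-Weibull index $\alpha=p$ (hence $\xi=-1/p$), the transformation $Y=1/D$ to land in the Fr\'echet domain with tail index $p$, the identification of $-\widehat\xi_n$ with the classical Hill estimator of the $Y_i$, and the citation of Hill consistency for intermediate sequences $k(n)$. Your closing remark on upgrading the pointwise small-ball limit to first-order regular variation is a point the paper glosses over, but it does not change the argument.
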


\begin{proof}

We first note that the survival function of $-D$ behaves 
at zero as
\begin{align*}
	P(-D> -\delta) &= P\{B_{\delta}(x_0)\}\\
    &=\int_{B_{\delta}(x_0)} f(x) \, dx\\
    &= f(x_{0}) V\{B_{\delta}(x_0)\} + o(1)\\
    &= f(x_{0}) \frac{\pi^{p/2}}{\Gamma(p/2+1)} \delta^p  + o(1),
\end{align*}
where $V\{B_{\delta}(x_0)\}$ is the volume of the ball $B_{\delta}(x_0)$ in $\mathbb R^p$, and the approximation holds as $\delta \to 0$.
Consequently, the distribution of $-D$ is in the max-domain of a GEV distribution with shape parameter $\xi = -1/p$.
This implies that $1 / D$ has a regularly varying survival function
$$P(1 / D > x) = \ell(x) x^{-p},$$
for some slowly varying function $\ell$ at infinity.
The upper tail of $1/D$ is thus in the max-domain of attraction of a GEV distribution with shape parameter $\tilde \xi = 1/p>0$. Rewriting the statistic $\widehat\xi$ using $u= R_{(n-k)}$ gives
\begin{align*}
	\widehat\xi_n= - \frac{1}{k}\sum_{i=1}^{k} \log \left(\frac{-1 / R_{(n+1-i)}}{-1 / R_{(n-k)}} \right),
\end{align*}
and we recognize the classical Hill estimator \citep{de2007extreme} applied to the sequence of independent copies $-1/R_1,\dots, -1/R_n$ of $1/D$.
The consistency of the Hill estimator (cf., Theorem 3.2.2 in \citet{de2007extreme}) and the fact that $1/D$ has shape parameter $1/p$ yields the desired result.
\end{proof}

So far we have supposed that the upper end point of $-D$ is zero. Under this null hypothesis, i.e., that the new point is known, the above theorem shows that the statistic $\widehat \xi$ is approximately the negative reciprocal of the predictor space dimension. Moreover, it is possible to show that the estimator \eqref{hill_stat} is a special case of the classical Hill estimator \citep{de2007extreme} and hence it is asymptotically normal under a second order condition. Using this fact, we could even derive asymptotic confidence intervals for $\xi$ based on it. The assumption on $k(n)$ is common in extreme value statistics to ensure the right trade-off between bias and variance of a tail estimator. 

Under the alternative, namely that the new point $x_0$ is unknown, we can encounter two situations. First, if the support of the new class is overlapping with the support of the known classes $\text{supp}(f)$, then $-D$ might have upper endpoint zero if $x_0$ falls into $\text{supp}(f)$.
Second, if the supports are not overlapping or $x_0\notin\text{supp}(f)$, then the upper endpoint of $-D$ is strictly smaller than zero. In this second case, the statistic $\widehat \xi$ can be shown to converge to zero.

\begin{theorem}\label{thm6}
Let $r^*$ be the upper endpoint of $-D$ and suppose that $r^* < 0$. Choose the threshold $u$ to be the order statistic $R_{(n-k)}$, and assume that $k = k(n)\to \infty$ and $k(n)/n \to 0$, as $n\to\infty$. Then the statistic in \eqref{hill_stat} converges almost surely to zero, that is,
 $ \widehat \xi_n \stackrel{a.s.}{\to} 0.$
\end{theorem}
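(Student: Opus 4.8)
The plan is to prove the result by a squeeze (sandwich) argument, exploiting the fact that, because $r^*<0$, all the top order statistics entering the statistic collapse onto the same \emph{nonzero} limit $r^*$, which forces every logarithmic term in \eqref{hill_stat} to vanish.

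First I would record the sign structure. Since $r^*<0$ is the upper endpoint of $-D$, every $R_i=-D_i$ satisfies $R_i\le r^*<0$, so all order statistics are strictly negative and the logarithms in \eqref{hill_stat} are well defined with threshold $u=R_{(n-k)}<0$. For each $i=1,\dots,k$ the exceedance $R_{(n+1-i)}$ lies in the interval $[R_{(n-k)},R_{(n)}]\subset(-\infty,0)$, hence $R_{(n+1-i)}/u=|R_{(n+1-i)}|/|u|\in(0,1]$ and each summand is nonpositive. Taking absolute values and using $R_{(n-k)}\le R_{(n+1-i)}\le R_{(n)}<0$ gives the uniform envelope
\[ 0 \;\le\; \Bigl|\log\frac{R_{(n+1-i)}}{u}\Bigr| \;=\; \log\frac{u}{R_{(n+1-i)}} \;\le\; \log\frac{R_{(n-k)}}{R_{(n)}}, \]
valid for every $i$, so that $0\le|\widehat\xi_n|\le\log\bigl(R_{(n-k)}/R_{(n)}\bigr)$.

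It then remains to show that this envelope tends to zero almost surely, which reduces to proving $R_{(n)}\to r^*$ and $R_{(n-k)}\to r^*$ almost surely. The convergence of the maximum is immediate: $R_{(n)}$ is nondecreasing in $n$, bounded above by $r^*$, and exceeds $r^*-\varepsilon$ eventually because the endpoint carries positive mass in every neighbourhood. For the intermediate order statistic I would argue via Borel--Cantelli. Writing $p_\varepsilon=P(R>r^*-\varepsilon)>0$ and $N_\varepsilon^{(n)}=\#\{i\le n:R_i>r^*-\varepsilon\}$, the event $\{R_{(n-k)}\le r^*-\varepsilon\}$ coincides with $\{N_\varepsilon^{(n)}\le k\}$, i.e.\ with $\{\mathrm{Bin}(n,p_\varepsilon)\le k\}$. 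Since $k=k(n)=o(n)$ while the mean $np_\varepsilon$ grows linearly, a Chernoff bound makes $P(\mathrm{Bin}(n,p_\varepsilon)\le k)$ summable in $n$, and Borel--Cantelli yields $R_{(n-k)}\to r^*$ almost surely. Because $r^*\ne 0$ the map is continuous at the limit, giving $\log(R_{(n-k)}/R_{(n)})\to\log(r^*/r^*)=0$ and hence $\widehat\xi_n\to 0$ almost surely by the squeeze.

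The main obstacle is precisely the almost sure convergence of the intermediate order statistic $R_{(n-k)}$ to the endpoint, since $k=k(n)$ couples the sample size and the rank into a triangular array. The crucial ingredient is the assumption $k(n)/n\to 0$: it keeps the threshold level $(n-k)/n\to 1$ deep enough in the tail that the number of exceedances of any fixed level $r^*-\varepsilon$ dominates $k$ with overwhelming probability, which is exactly what makes the binomial tail summable. The contrast with Theorem \ref{thm5} is worth noting: when $r^*=0$ the ratios $R_{(n+1-i)}/u$ do not converge to $1$ but stabilise through regular variation, producing the nondegenerate limit $-1/p$, whereas a strictly negative endpoint forces all ratios to $1$ and collapses the estimator to zero.
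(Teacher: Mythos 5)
Your proof is correct and follows essentially the same route as the paper's: a squeeze argument bounding $|\widehat\xi_n|$ between $0$ and a logarithm of a ratio of order statistics that collapses to $\log 1=0$ because both $R_{(n-k)}$ and the top exceedances converge to the nonzero endpoint $r^*$. The only difference is that you additionally supply the Borel--Cantelli/Chernoff justification for $R_{(n-k)}\stackrel{a.s.}{\to}r^*$, a step the paper asserts without proof, and you use $R_{(n)}$ rather than $r^*$ as the lower reference point in the envelope, which is an immaterial variation.
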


\begin{proof}
	We note that 	
	\begin{align*}
	0 &\leq -\widehat\xi_n = \frac{1}{k}\sum_{i=1}^{k} \left\{\log(-R_{(n-k)}) - \log(-R_{(n+1-i)}) \right\}\\
    &\leq \log(-R_{(n-k)}) - \log(-r^*) .
\end{align*}
Since $-R_{(n-k)} \to -r^*$ almost surely as $n\to\infty$, this proves the assertion.
\end{proof}

We can use this fact to do a first test and mark a new point $x_0$ for which $p \widehat\xi_n$ is significantly larger than $-1$ as an unknown point.
Note that we use the quantity $p \widehat\xi_n$ instead of $ \widehat\xi_n$ to
stabilize the asymptotic variance and to avoid numerical instabilities as $-1/p\to 0$ as $p\to\infty$. In fact, under a second order condition and by asymptotic normality of the Hill estimator (cf., Theorem 3.2.5 in \citet{de2007extreme}), the quantity $p \widehat\xi_n$
is approximately normal with mean $-1$ and variance $1$.
With this step we have excluded all the points that have no possibility to belong to one of the classes of the training set.

If, on the other hand, $p \widehat\xi_n$ is close to $-1$, we cannot reject the null hypothesis that $x_0 \in \text{supp}(f)$. In this case, Theorem \ref{llh_thm} with threshold $u = R_{(n-k)}$ yields the tail approximation for $x > R_{(n-k)}$
\begin{align}\label{GPD_approx}
	P(-D > x) & \approx \frac{k}{n} \left\{1 - \widehat H(x-R_{(n-k)})\right\} = \frac{k}{n} \left(x/R_{(n-k)}\right)^{-1/\widehat \xi_n},
\end{align}
where $\widehat H$ is the fitted GPD distribution.
There is thus the possibility that $x_0$ was generated from one of the known classes. Obviously, we have to exclude points that have positive, but very small probability of being knowns.
We therefore compute the size of the ball around $x_0$ that contains a fixed amount $\gamma>0$ of mass of $f$. More precisely, we let $q_\gamma<0$ be the $(1-\gamma)$-quantile of $-D$ for some small $\gamma < k/n$, which we can compute from \eqref{GPD_approx} as 
$$q_\gamma = R_{(n-k)} + \widehat H^{-1}(1-n\gamma/k) =  R_{(n-k)} (n\gamma/k)^{-\widehat \xi}.$$
Then $\widehat P\{B_{-q_\gamma}(x_0)\} = 1- \gamma$, and the smaller $-q_\gamma$ the higher the training density $f(x_0)$ around $x_0$. Thus, if $-q_\gamma$ obtained for the new point $x_0$ is significantly larger than the corresponding quantity for a generic point in the training set, we mark $x_0$ as unknown since the density $f(x_0)$ around $x_0$ is too small. Otherwise we mark it as known. The probability $\gamma$ should be chosen small enough to have a good approximation of the magnitude of the density around $x_0$, but at the same time large enough to obtain a reliable estimate of $q_\gamma$. We choose $\gamma =1/n$ since this is sufficiently small and 
$ R_{(n-k)} + \widehat H^{-1}(1-1/k)$ is usually a very good approximation of the true quantile.

\subsection{The GPDC algorithm}\label{GPDCalgorithm}
Using the results on the asymptotic behavior of the statistic~$\widehat \xi_n$, we propose the following algorithm to classify a new point~$x_{0}$.
\begin{enumerate}
\item Compute the negated distances $-D_1,\dots,-D_n$ between $x_{0}$ and each point in the training set.
\item Estimate $\widehat\xi_n$ using only the biggest $k$ negated distances $R_{(n)}, \dots, R_{(n+1-k)}$.
\item Perform the hypothesis test
\begin{align*}
&H_0\text{: }p \xi_n=-1\\
&H_1\text{: }p \xi_n=0.
\end{align*}
If $p\widehat\xi_n$ is smaller than a given threshold $s>-1$, mark $x_{0}$ as possibly known and go to the next point, otherwise mark it as unknown and exit the algorithm.
\item Compute the $(1-1/n)$-quantile of $-D$ by $q_{\gamma}=R_{(n-k)} + \widehat H^{-1}(1-1/k)$ using the estimated GPD $\widehat H$ and $\gamma=1/n$.
\item If the radius $-q_\gamma$ is bigger than a given threshold $t>0 $ mark $x_{0}$ as unknown, otherwise mark it as known. This step can be viewed as a second hypothesis test.
\end{enumerate}
Some of the steps require further explanation. 
The number of upper order statistics $k$ in the second step that is used to obtain $\widehat\xi_n$ corresponds to the classical bias-variance trade-off of the Hill estimator. There are widely used diagnostics plots that help to determine the best $k$. The outcome of our GPDC algorithm does not strongly depend on the choice of $k$, as soon as it is chosen small enough and of the order $k = k_n = o(n)$, as suggested by Theorem \ref{thm6}. 

Step 3.~and step 5.~require the choice of the thresholds 
$s$ and $t$ to perform the hypothesis tests. It is common to fix the type I error to some probability $\alpha$ such as $5\%$, and then to compute the threshold that realizes this error. To this end we require the distribution of the test statistic under the null hypothesis. Instead of relying on asymptotic results, we propose to execute the algorithm using all the points in the training set as unknown one after the other in a jackknife fashion and to obtain in this way each time $\widehat\xi^{(i)}_n$ and $-q_{\gamma}^{(i)}$, $i=1,\dots,n$. We then jointly set the thresholds $s$ and $t$ to the $(1-\alpha/2) \%$ quantiles of $\widehat\xi^{(1)}_n,\dots,\widehat\xi^{(n)}_n$
and $-q_\gamma^{(1)},\dots,-q_\gamma^{(n)}$, respectively, such that we mark at most $\alpha \%$ of the training data as unknown following Bonferroni's correction for multiple testing \citep{shaffer1995multiple}. Note that this requires to compute $O(nk \log n )$ distances, roughly the same as in the training phase of the EVM. We can do this procedure at the beginning of the life of the algorithm and repeat it once in a while if new training data arise.

We emphasize that, contrary to what it is stated in the EVM paper \citep{rudd2018extreme}, one should be careful to choose hyper parameters based on cross-validation by randomly splitting the classes in the training set in knowns and unknowns. This can be misleading since known classes convey no information on the unknowns.

It is important to underline that this algorithm relies on the asymptotic approximation by a GPD and hence it is asymptotically exact as the number of training samples tends to infinity. Moreover, it is completely kernel free and its implementation is very efficient using fast nearest-neighbor searching \citep{arya1998optimal}. In fact, during the evaluation of a new point $x_{0}$ we have to compute only $O(k\log n)$ distances, whereas with the EVM we have to compute $O(n)$ distances. We stress also that the algorithm supports incremental learning since it does not require any training procedure once the thresholds for the hypothesis tests are fixed. If the interest is not only in open set classification our algorithm can be completed with an incremental classifier that performs standard classification only on the points marked as knowns.

Considering all training data as belonging to only one class is not a limitation of the algorithm. In fact, if one distribution $f_{C_j}$ dominates the others (especially regarding the tail behaviour) and thus causes a mixture $f$ that is not informative for all training classes, a simple solution is to train separate GPDCs for each class and marking a new point as unknown if all the classifiers mark it as unknown. The same consideration holds for the GEVC.

\section{The GEV Classifier} 
\label{sec_gevc}
The GPDC in the previous section was based on the approximation of threshold exceedances by the GPD distribution according to Theorem \ref{thm2}. In this section we propose another algorithm to do open set classification without relying on the geometry of the observed data. It is based on the approximation by the GEV distribution given in Theorem \ref{thm1}, and we thus refer to it as the GEVC.

For simplicity, as before, all known classes are collapsed into one big class and we denote the training data with $x_i$, $i=1,\dots,n$, and with $f$ the probability density describing the distribution of the training data. For each $i=1,\dots, n$, we compute the distance between the training point $x_i$ and the nearest training point to it, i.e.,
$$D_i^{min}=\min_{j\ne i}||x_i-x_j||. $$
This phase, using fast neighbor search, requires to compute only $O(n\log n)$ distances. Note that the quantities
$$-D_i^{min}=\max_{j\ne i}-||x_i-x_j|| $$
are bounded above by zero. Even though these random variables are not exactly independent, their dependence seems weak enough to still apply methods from extreme value theory. Using this and Lemma \ref{lem3} we can fit a reversed Weibull distribution to $-D_1^{min},\dots,-D_n^{min}$, and thus estimating the distribution of $-D^{min}$, i.e., the distribution of the maximum of the negated distances between a known point and all the remaining points in the training set. Denote this estimated reversed Weibull distribution function by $\widehat W$.

When a new point $x_0$ arises and we want to mark it as known or unknown we consider the statistic
$$ -d_0^{min} = \max_{i=1,\dots, n}-||x_i-x_0||,$$
in order to perform the following hypothesis test:
\begin{align*}
&H_0\text{: }x_0 \text{ is known}\\
&H_1\text{: }x_0 \text{ is unknown}.
\end{align*}
Note that this operation requires to compute only $O(\log n)$ distances.
Under the null hypothesis, $-d_0^{min}$ is approximately a sample of the distribution $-D^{min}$. Thus we expect the quantity 
$$P(-D^{min} < -d_0^{min})$$
to be not too small, whereas under the alternative hypothesis this is not guaranteed. We can then fix a level $\alpha>0$ for the type I error to determine if the new point $x_0$ is known or unknown. Given the estimated distribution $\widehat W$ that approximates $-D^{min}$, we reject the null hypothesis if
$\widehat W(-d_0^{min}) < \alpha$. In this case, the smallest distance of the new point to a training sample is too large, and we therefore mark it as unknown. Otherwise it is marked as known. Note that, also with the GEVC, the threshold $\alpha$ permits to control directly the type I error. Moreover, the GEVC has no hyper parameters and it is fast to update when we add new data to the training set, since revising $-D_i^{min}$ requires only to compute the minimum distance between $x_i$ and the added points.
\section{Application}
\label{sec_app}

\begin{figure*}[htp]
  \centering
  {\includegraphics[scale=0.43]{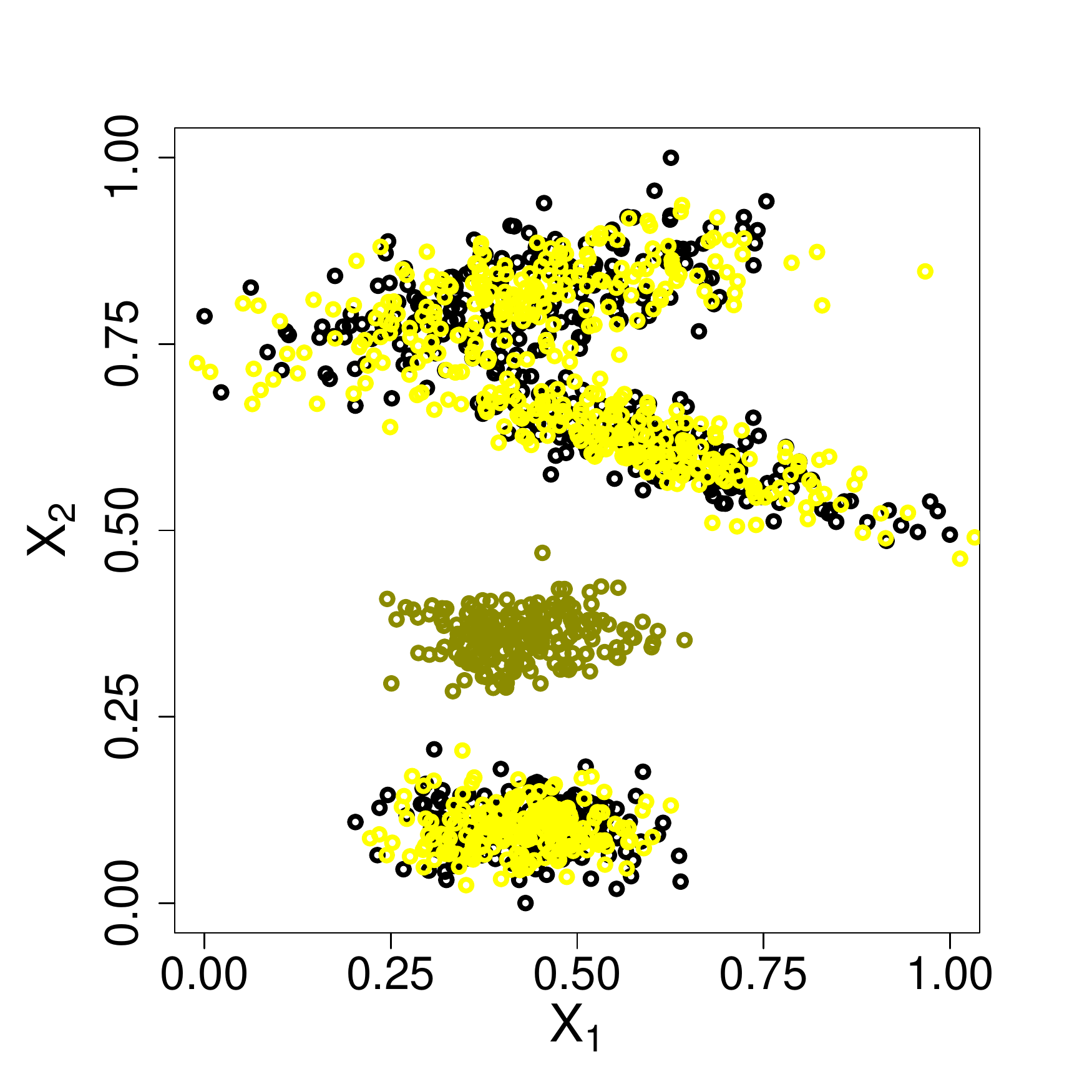}}\quad
  \subfigure{\includegraphics[scale=0.43]{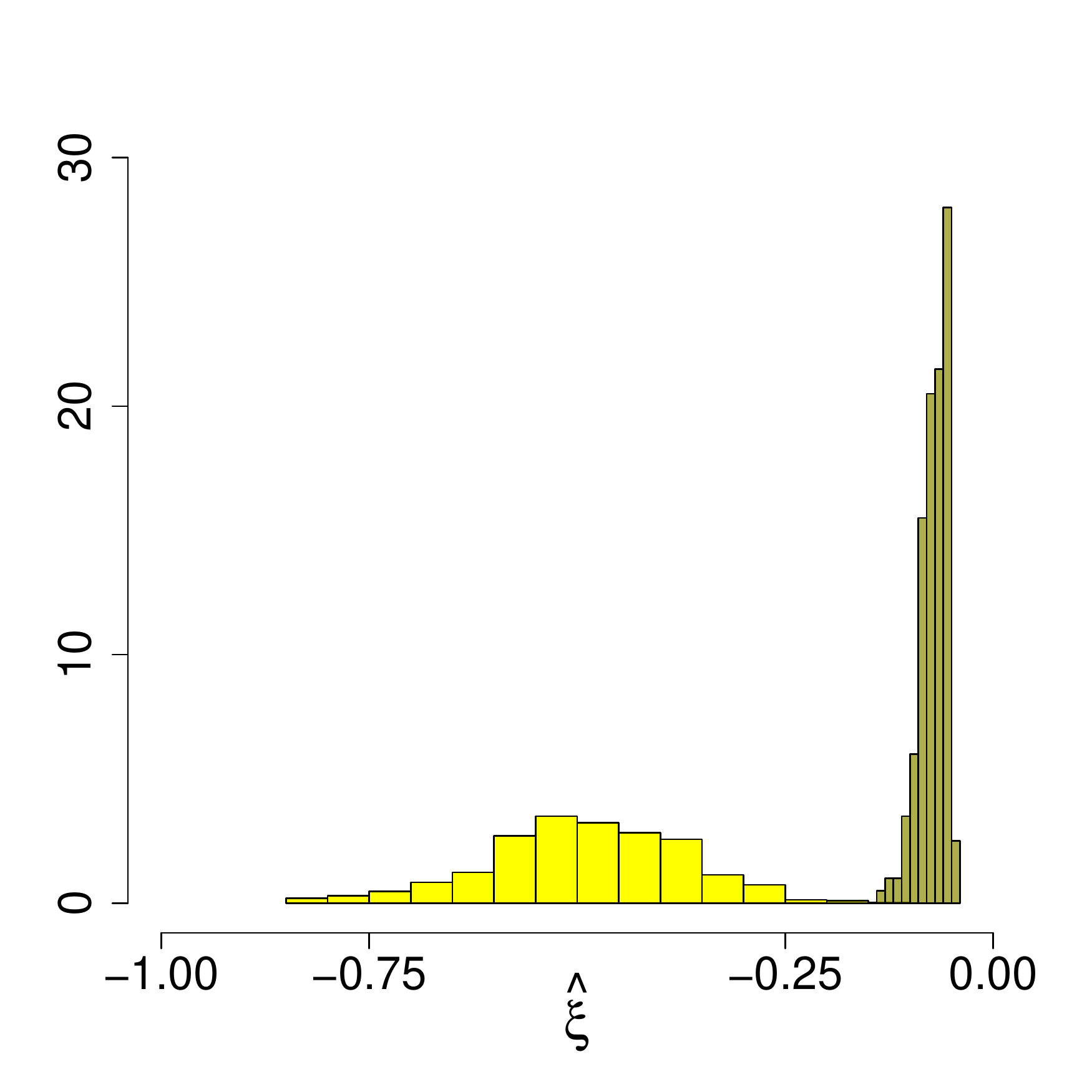}}
\caption{Left panel: the simulated dataset for the toy example: training data (in black), the known (bright yellow) and unknown examples (dark yellow) from the test set.
Right panel: the $\widehat\xi_n$ estimates for known (bright yellow) and unknown test data (dark yellow) for the simulated dataset.}
\label{fig:1}
\end{figure*}

\begin{figure*}[htp]
  \centering
  {\includegraphics[scale=0.43]{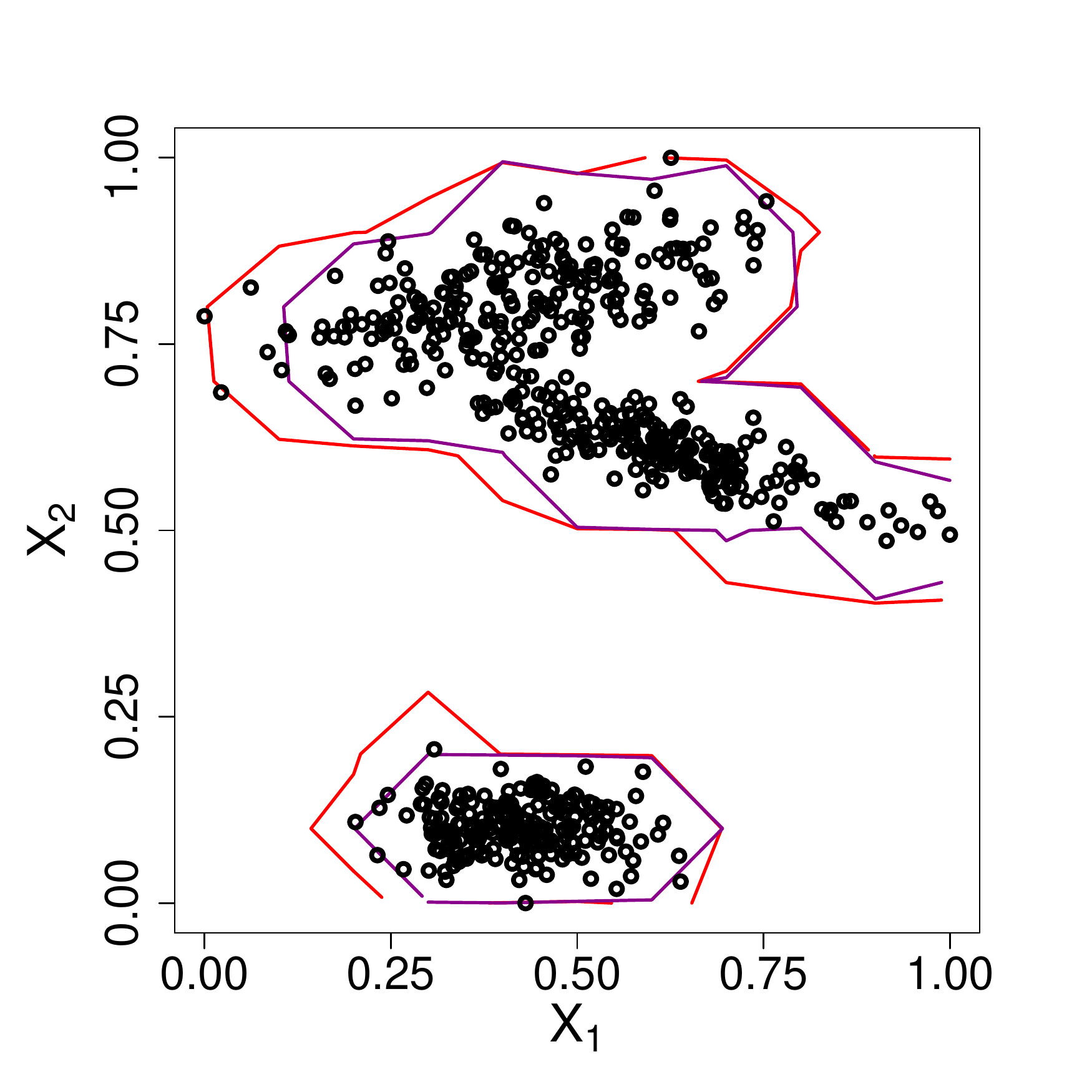}}\quad
  \subfigure{\includegraphics[scale=0.43]{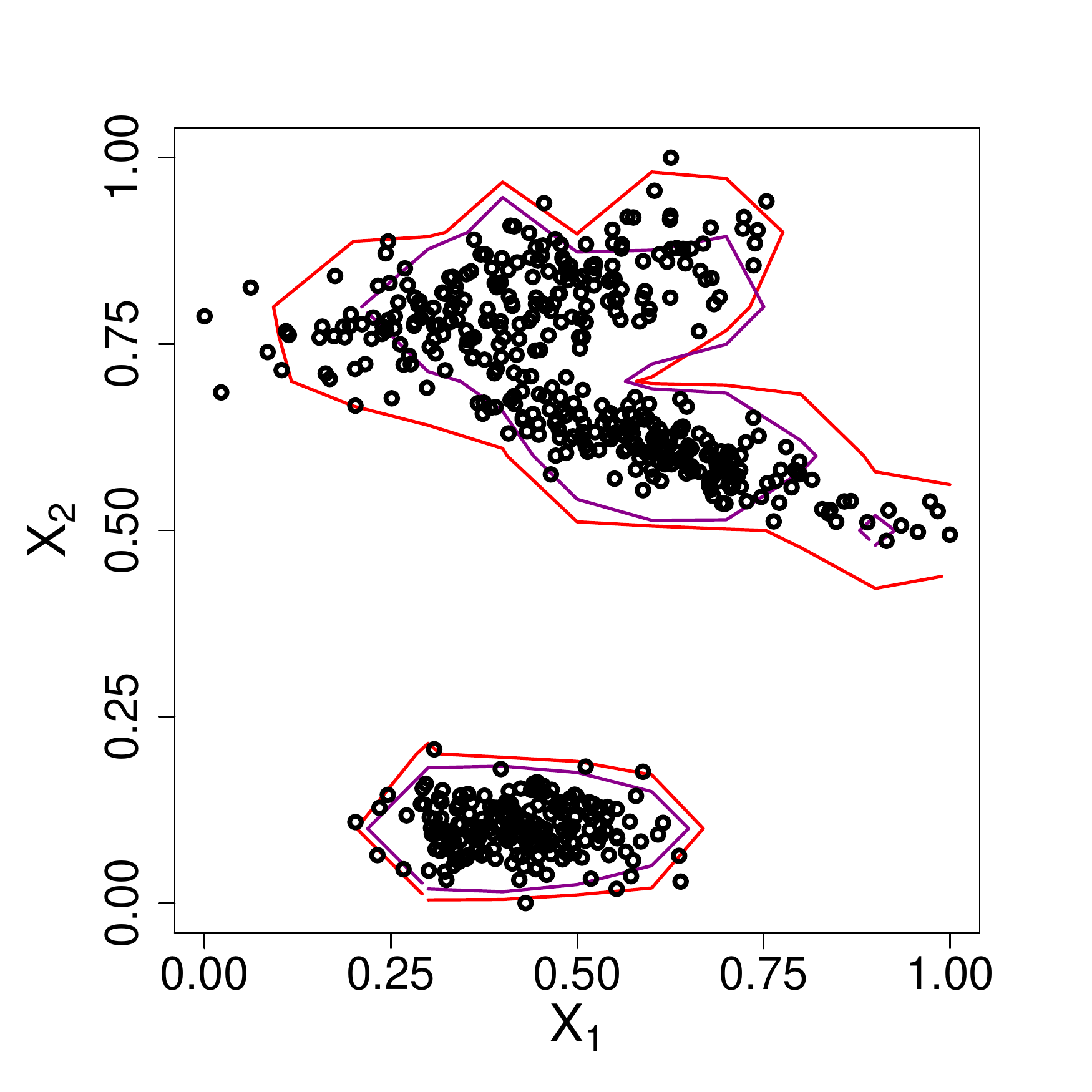}}
\caption{Left panel: decision boundaries for the GPDC (in red) and the GEVC (in magenta) with $\alpha=0.01$.
Right panel: decision boundaries for the GPDC (in red) and the GEVC (in magenta) with $\alpha=0.1$.}
\label{fig:contour}
\end{figure*}

In this section we compare the EVM introduced in \citet{rudd2018extreme} with our GPDC and the GEVC on simulated and real data. For completeness, when using real data, we compare the methods also with the One-Class SVM \citep{scholkopf2000support}, a state of the art method for novelty detection. For this last technique, we use radial kernel and we select its hyper parameters using part of the data as validation set, including both known and unknown objects, following what is proposed in the original paper. This approach is possible only in an experimental setting, since in real open world recognition we do not have any access to the unknown objects during the training phase.
\subsection{Simulated data}

We begin our experimental evaluation with a toy example that shows that the EVM may perform poorly when the geometry given by the training data set conveys misleading information about the unknown classes. The training set has $n=600$ observations and it is composed of three classes, each one of them is sampled from a different bivariate normal distribution, i.e., the dimension of the predictor space is \mbox{$p=2$}. The test data set has 800 observations and it is composed of examples from these three classes plus another unknown class, sampled from another bivariate normal distribution. Both the train and the test data are shown in the left panel of Figure \ref{fig:1}. The unknown objects are well separated from the known ones. 

We apply the EVM, the GPDC and the GEVC to this data set and, to solve the problem given by the fact that the probability threshold of the EVM is not interpretable as those of the GPDC and the GEVC, we choose to evaluate the performance of the different algorithms using the obtained area under the ROC Curve (AUC) \citep{bradley1997use} that does not require to fix a threshold. For both the EVM and the GPDC we use $k=20$, but the results are stable for different values of $k$. 
The EVM performs poorly in this rather simple scenario and has an AUC of 0.853. This is due to the fact that the unknown class is relatively close to the known class at the bottom compared to the other known classes, even if it is perfectly separated from it. Conversely, the GPDC and the GEVC do not suffer of this type of issues since they do not rely on the distances between the known classes to infer about the unknown ones, and their results in this toy example are close to perfect for both algorithms, with AUC of 0.997 and 0.999, respectively. To show the effectiveness of the GPDC to determine whether new examples are in the support of the training distribution, we report the estimated $\widehat\xi_n$ of the test in point 3.~of the algorithm in Section \ref{GPDCalgorithm} for both known and unknown new data. The right panel of Figure~\ref{fig:1}
shows for this dataset that the $\widehat\xi_n$ estimates for known data are, as expected by Theorem \ref{thm5}, close to $-1/2$. On the other hand, the estimates corresponding to unknown data are much closer to $0$, as it is suggested by Theorem \ref{thm6}. The first test of the GPDC algorithm thus already has a good power to filter out the unknown classes for this data set. We note that, strictly speaking, the supports of all classes are overlapping, but for this finite number of data they are effectively separated so that the test works well.

Figure~\ref{fig:contour} shows the decision boundaries of the GEVC and the GPDC for $\alpha=0.01$ and $\alpha=0.1$. It can be seen that both algorithms produce highly flexible decision boundaries that are capable to follow the shape of the training data.
In some sense, one can see these boundaries as level sets of the training density, and a new point is considered as unknown if it lies outside of these level sets. 
\cite{he2017} also considered extrapolation of level sets into low density regions but using the more restrictive assumption of multivariate regular variation.

\subsection{OLETTER protocol}

\begin{figure}
  \centering
      \includegraphics[scale=0.43]{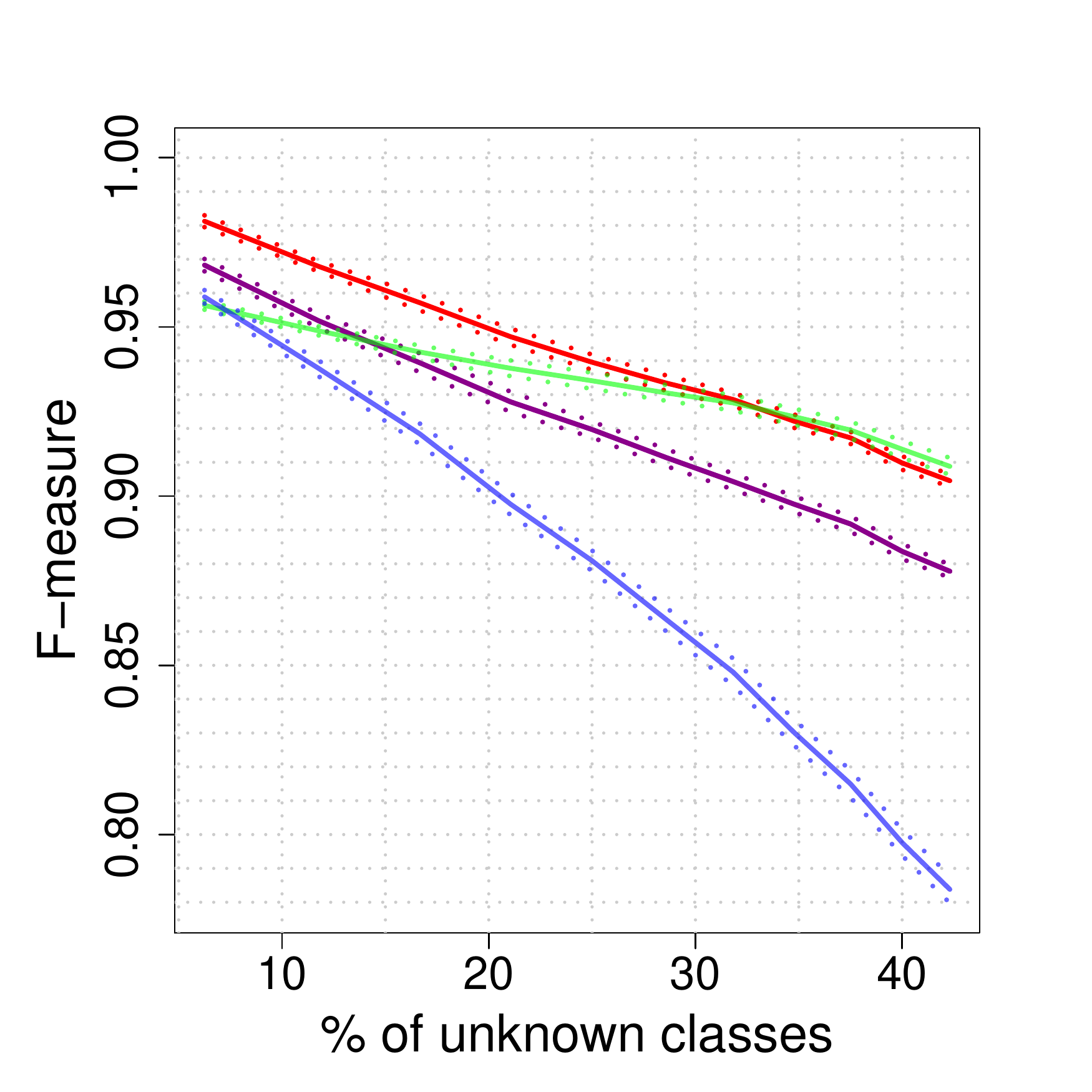}
  \caption{Results for the OLETTER protocol for the EVM (green line), the GPDC (red line), the GEVC (magenta line) and the One-Class SVM (blue line) with one standard deviation confidence intervals (dotted lines). 
}
\label{fig:2}
\end{figure}

In order to evaluate the open set classification performance of the GPDC and the GEVC for real data we compare the two techniques with the EVM and the One-Class SVM using the OLETTER protocol proposed in \citet{bendale2015towards}. This protocol is based on the LETTER data set \citep{frey1991letter} that contains a total of 20000 observations of 26 different classes corresponding to handwritten letters. The predictor space is composed of $p=16$ features that have been extracted from the handwritten letters. The training set has 15000 observations. The protocol consists in randomly selecting 15 classes that are considered as known during training and adding unknown classes by incrementally including subsets of the remaining 11 classes during testing, varying in this way the amount of openness. This process is then repeated 20 times, in a cross-validation fashion. We evaluate the performance of the different algorithms using the obtained $F$-measure \citep{huang2005using} as done in the original paper \citep{bendale2015towards}. For all the four algorithms we set dynamically the probability threshold using the heuristic rule proposed in \citet{rudd2018extreme} that accounts for the amount of openness at each step of the protocol. We set the hyper parameter $k=75$ for the EVM as in original paper, and $k=22$ for the GPDC, roughly corresponding to use the $0.25\%$ of the biggest negated distances. 

The results for the EVM, the GPDC and the GEVC are reported in Figure \ref{fig:2}. It can be seen that all the three methods based on EVT perform better than the One-Class SVM, even if they have fewer hyper parameters than the latter method. Moreover, the hyper parameters of the One-Class SVM were tuned using also unknown examples and that is not realistic during a real use of an open set classifier. The GPDC is the best method in this case, while the GEVC is competitive with the other two even if it has no hyper parameters.

\subsection{Diagnostics of thyroid disease}

\begin{figure*}[htp]
  \centering
  {\includegraphics[scale=0.43]{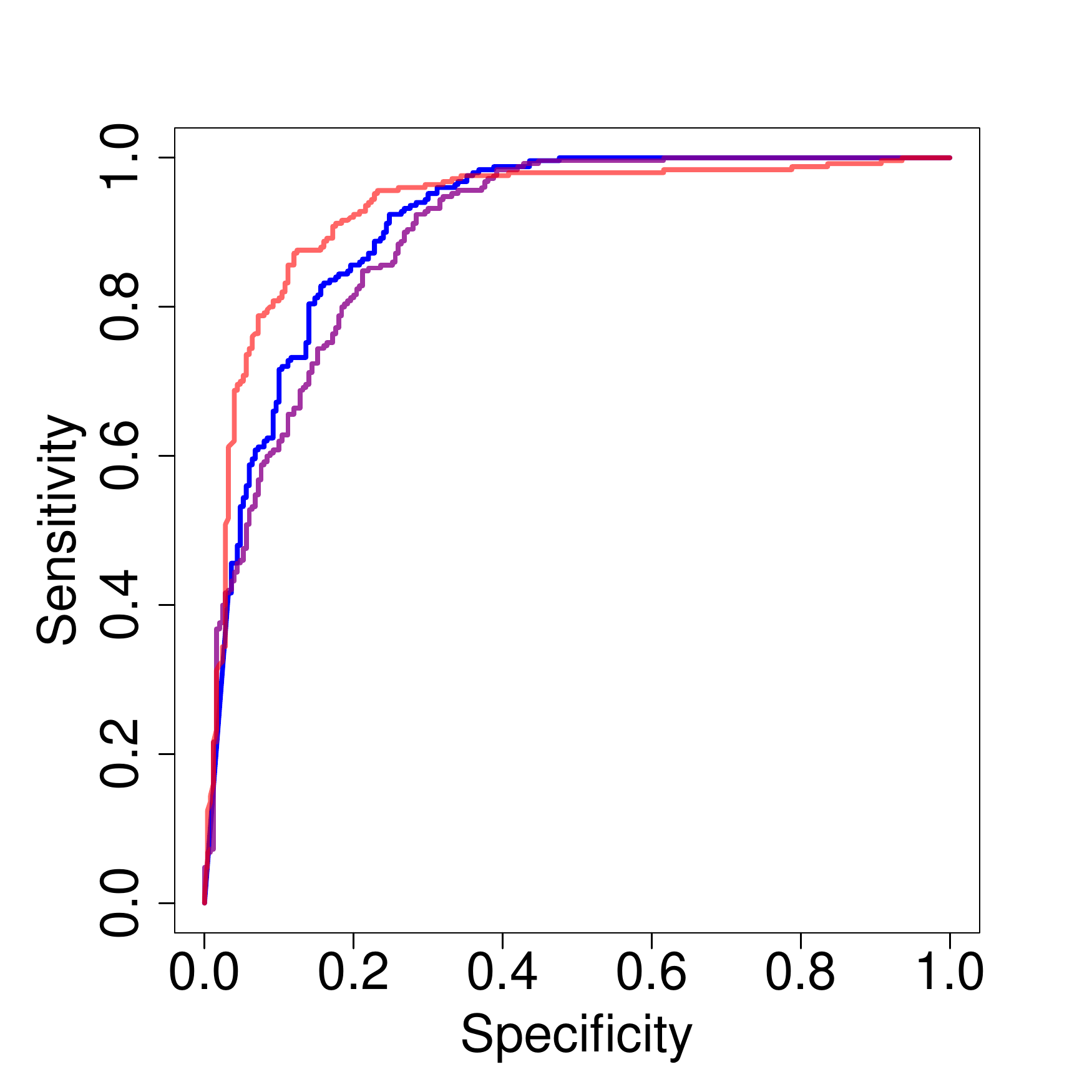}}\quad
  \subfigure{\includegraphics[scale=0.43]{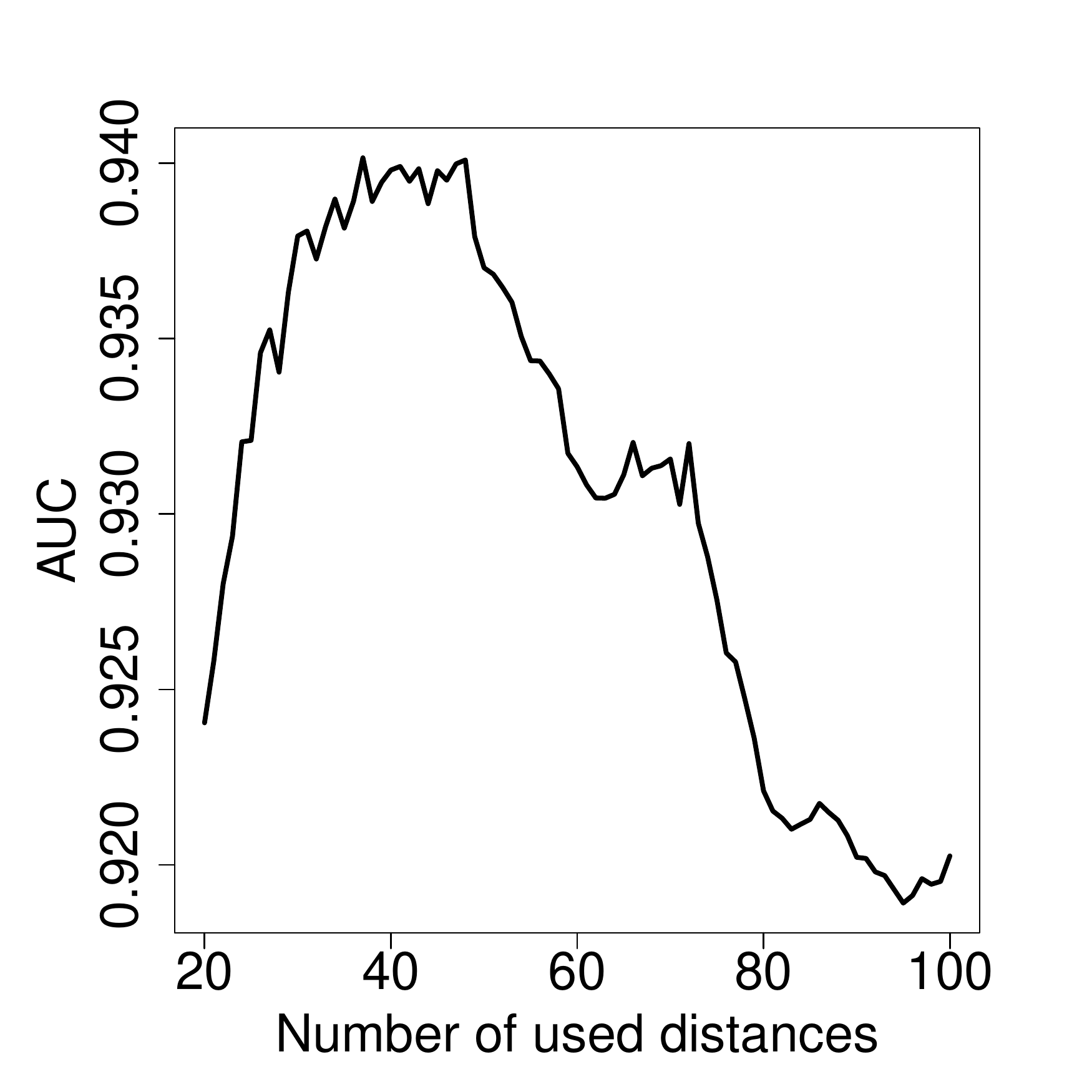}}
\caption{Left panel: ROC curves obtained on the thyroid disease dataset for the One-Class SVM (blue line), the GPDC (red line) and the GEVC (magenta line).
Right panel: AUC as a function of the number $k$ of most extreme used distances for the GPDC.}
\label{fig:3}
\end{figure*}

We consider an application of our algorithms for open set classification to diagnostics of thyroid disease. We analyse the thyroid dataset \citep{quinlan1986inductive,schiffmann1992synthesis}, available at the UCI machine learning repository \citep{lichman2013uci}. The original dataset contains raw clinical measurements from healthy non-hypothyroid and sick patients. These raw measurements were pre-processed in order to easily apply directly neural networks and other common machine learning techniques, resulting in 21 features for each patient. In the final dataset there are 250 sick subjects and 6666 healthy subjects. We consider sick subjects as unknown objects and healthy subjects as known objects. We use all the 250 sick patients together with 250 randomly selected healthy patients to compose the test set. All the remaining healthy patients compose the training set.

Since in this case the known objects belong to only one class, the EVM cannot be used with this dataset. For this reason, we compare the GEVC and the GPDC only with the One-Class SVM. Also here, we use the AUC as evaluation measure. For the GPDC we train it with five different values of $k$, corresponding to using the most extreme $0.25\%$, $1\%$, $2.5\%$, $5\%$ and $10\%$ distances, respectively. These are common choices in extreme value applications and finally we consider the value of $k$ that gives the best ROC curve. For the One-Class SVM with radial kernel we consider different combinations of its hyper parameters and we retain the ones that gives the best results.

The ROC curves obtained for the GPDC, the GEVC and the One-Class SVM are shown in the left panel of Figure \ref{fig:3}. The performance of the three methods are comparable, with an AUC of 0.931, 0.897, 0.912 respectively. We underline again this result since it is particularly favorable to both the GPDC and the GEVC that are reaching the same performance as a state of the art kernel based method like the One-Class SVM. These new methods are kernel free and the GPDC has only one hyper parameter that can be chosen based on EVT, whereas the GEVC has no hyper parameters at all.

The right panel of Figure \ref{fig:3} shows the performance of the GPDC as a function of the number $k$ of most extreme distances used. It can be seen that the GPDC shows a good performance for a wide range of thresholds and the best results are achieved for fairyl small $k$ as suggested wby EVT.

\section{Conclusion}
We present two new kernel free algorithms that perform open set classification using extreme value theory.  These algorithms, called  the GPDC and the GEVC, are fast to update with the arrival of new data and they are easy to adapt to an incremental framework. Moreover, they do not use the geometry of the known classes to infer about the unknowns and are thus able to overcome certain restrictions of previously proposed methods. Their performances are therefore good even when the unknown test data are close to a known class relatively to the other knowns. To show this fact and the effectiveness of the new GPDC and the GEVC, we compare them to the EVM \citep{rudd2018extreme}, another kernel free technique that uses EVT for open set classification, and the more classical kernel based One-Class SVM \citep{scholkopf2000support}, a state of the art technique for novelty detection. The results of our methods on a simulated toy example and on real data sets are competitive. A major strength is that they perform well in very different situations. This good performance in general tasks is probably due to the fact that our methods rely on well-established statistical methods and asymptotically motivated approximations from univariate extreme value theory that apply under very mild conditions. We also underline that both the GPDC and the GEVC are computationally faster than the EVM during the evaluation phase.

The GPDC and the GEVC might be further improved by suitable representations of the data, for example using convolutional neural networks to extract features when working with images and we plan to perform more detailed evaluation of their performance in this direction on standard datasets such as ImageNet \citep{imagenet_cvpr09} or \mbox{CIFAR-100} \citep{krizhevsky2009learning}. Furthermore, to be fully incremental, they need to be capable to store only a subset of the training data. This may be achieved developing a suitable sampling technique that reduces the dimensionality of the training dataset without affecting the asymptotic correctness of the algorithms. 


The result in Theorem \ref{thm5} on the shape parameter of sample distances can be of independent interest. In particular, it will be studied how this can be used for a general method for extrapolation of level sets into low density regions, similarly as in \cite{he2017}.

\section*{Acknowledgments}
Sebastian Engelke was supported by the Swiss National Science Foundation; the paper was completed while he was a visitor at the Department of Statistical Sciences, University of Toronto.

\bibliography{bibliography}

\end{document}